




\documentclass{ecai2025} 



\usepackage{latexsym}
\usepackage{amssymb}
\usepackage{amsmath}
\usepackage{amsthm}
\usepackage{booktabs}
\usepackage{enumitem}
\usepackage{graphicx}
\usepackage{color}

\usepackage{balance} 
\usepackage{graphicx}
\usepackage{xspace}
\usepackage{fontawesome5}
\usepackage{tcolorbox}
\tcbuselibrary{breakable}
\usepackage{amsmath}
\usepackage{enumitem} 
\usepackage{algorithm}
\usepackage{algorithmic}
\usepackage{stfloats}
\usepackage{float}
\usepackage{subcaption} 
\usepackage{amssymb}


\newtheorem{theorem}{Theorem}
\newtheorem{assumption}{Assumption}



\newcommand{\BibTeX}{B\kern-.05em{\sc i\kern-.025em b}\kern-.08em\TeX}


\begin{document}


\begin{frontmatter}


\paperid{2691} 


\title{Bi-level Mean Field: Dynamic \\ Grouping for  Large-Scale MARL}


\author[a]{\fnms{Yuxuan}~\snm{Zheng}}
\author[a]{\fnms{Yihe}~\snm{Zhou}}
\author[a]{\fnms{Feiyang}~\snm{Xu}} 
\author[a]{\fnms{Miingli}~\snm{Song}} 
\author[b,*]{\fnms{Shunyu}~\snm{Liu}} 

\address[A]{Zhejiang University}
\address[B]{Nanyang Technological University}


\begin{abstract}
Large-scale Multi-Agent Reinforcement Learning~(MARL) often suffers from the curse of dimensionality, as the exponential growth in agent interactions significantly increases computational complexity and impedes learning efficiency.
To mitigate this, existing efforts that rely on Mean Field~(MF) simplify the interaction landscape by approximating neighboring agents as a single \textit{mean} agent, thus reducing overall complexity to pairwise interactions. 
However, these MF methods inevitably fail to account for individual differences, leading to aggregation noise caused by inaccurate iterative updates during MF learning. 
In this paper, we propose a Bi-level Mean Field~(BMF) method to capture agent diversity with dynamic grouping in large-scale MARL, which can alleviate aggregation noise via bi-level interaction. 
Specifically, BMF introduces a dynamic group assignment~module, which employs a Variational AutoEncoder~(VAE) to learn the representations of agents, facilitating their dynamic grouping over time.
Furthermore, we propose a bi-level interaction module to model both inter- and intra-group interactions for effective neighboring aggregation.
Experiments across various tasks demonstrate that the proposed BMF yields results superior to the state-of-the-art methods.
\end{abstract}

\end{frontmatter}


\section{Introduction}

Multi-Agent Reinforcement Learning~(MARL) involves multiple autonomous agents operating within a shared environment, which has been widely applied in domains such as robotics~\cite{viksnin2019police,piardi2019arena}, autonomous driving~\cite{yan2024multi,yeh2024toward} and UAV trajectory design~\cite{zhou2024joint,fan2024multiagent}. In practical real-world scenarios, many tasks involve a large number of agents, which necessitates the use of large-scale MARL for efficient cooperation. However, unlike traditional MARL, large-scale settings, especially in complex scenarios, face the curse of dimensionality. This is due to the massive volume of interactions among agents~\cite{zhou2024decentralized,shike2023mix}, which  increases computational costs and hampers the learning process.

To remedy this problem, the mean field (MF) method offers a solution by approximating the interactions between agents using an averaging mechanism~\cite{yang2018mean}. Specifically, it treats the collective influence of all other agents as a single virtual \textit{mean} agent, simplifying each agent interaction into one between itself and this virtual entity. This significantly reduces the dimensionality of the interaction space. However, this averaging mechanism disregards individual differences between all agents, and the noise introduced by iterative approximations can degrade the cooperative performance~\cite{zhu2021survey,xu2021robustness}.

Recent advancements in MF methods can be categorized into two primary ways: (1) The first aims to enhance the accuracy of the virtual mean agent model, as demonstrated by GAT-MF~\cite{hao2023gat}, GAMFQ~\cite{yang2023partially}, and MFRAD~\cite{wu2022weighted}. These methods transition from a simple averaging scheme to a weighted version by leveraging graph-based techniques, which help reduce noise introduced by naive averaging. However, they introduce a significant computational burden, as they require pairwise computations between all agent pairs. This challenges the scalability of these methods in large-scale MARL. (2) The second seeks to represent agent heterogeneity by grouping agents into multiple types and assigning a virtual mean agent to each group, as seen in MTMF~\cite{ganapathi2020multi} and NPG~\cite{mondal2022approximation}. Compared to orignal MF, they improve accuracy by constructing multiple virtual mean agents. But these methods rely heavily on prior knowledge to define agent groups and do not consider the impact of actions from other groups. This lack of communication between groups leads to information loss during long-term aggregations, limiting adaptability to dynamic environments and different tasks.

In this paper, we introduce a novel Bi-level Mean Field (BMF)  method for large-scale MARL, which can alleviate interaction aggregation noise while maintaining low computational overhead, making it adaptable to diverse and dynamic agent-based tasks. The proposed BMF comprises two key components, namely, dynamic group assignment module and bi-level interaction module. 
Specifically, the dynamic group assignment module utilizes a VAE-based extractor to derive agent representations based on their individual observations and features. Agents are then dynamically assigned to groups based on their representations using k-means clustering. 
Furthermore, the bi-level interaction module improves the traditional MF method by incorporating both inter- and intra-group interactions. Within groups, the classic MF can effectively model agent interactions, given their similarity. However, across groups, the distinct characteristics of agents require the introduction of a group attention mechanism to model inter-group dynamics. The intra-group features are captured by MF, while the inter-group interactions are modeled using the attention mechanism. By combining these two modules, BMF reduces the noise caused by iterative aggregation processes and lowers computational overhead. 
Figure~\ref{BMF_MF} demonstrates the two-phase implementation of BMF, and highlights the differences between normal and MF methods for large-scale MARL, where the normal method has a very high feature dimensionality and the MF method suffers from aggregation noise brought by MF approximation. 
Our main contributions can be summarized as follows:

\begin{enumerate}[label=(\arabic*)]
  \item We develop BMF, a method that involves a bi-level aggregation process to Improve classic MF under large-scale MARL. Besides, we theoretically analyze the validity of BMF.

  \item We propose a dynamic group assignment module using learnable VAE-based agent representations, allowing for adaptive group assignments over time.

  \item We incorporate a bi-level interaction module, combining intra-group mean-field modeling with inter-group attention to reduce aggregation noise in MF.

  \item Experimental results across various large-scale MARL tasks show that BMF offers superior performance while maintaining a moderate computational cost.
\end{enumerate}

\begin{figure}[!h]
  \centering
  \includegraphics[width=1\linewidth]{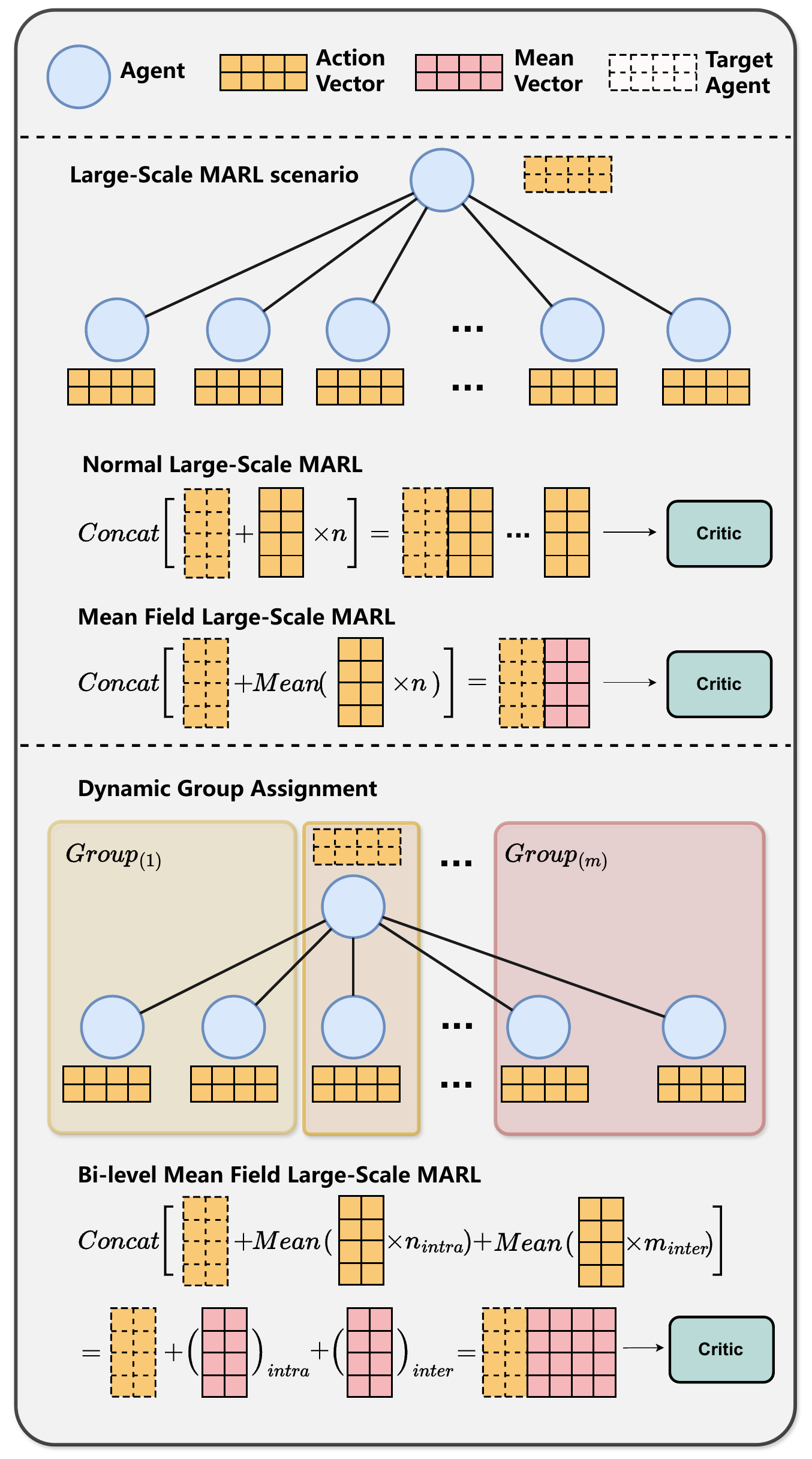}
  \caption{Normal, mean field and bi-level mean field methods apply to the large-scale MARL scenario. Limitation of normal large-scale MARL is the excessively high dimension of the concatenated critic input. Limitation of mean field is ignoring agents' difference and solely using average aggregation, resulting in a loss of precision. Our proposed Bi-level Mean Field (BMF) effectively reduces the critic input dimension while ensuring precision.}
  \label{BMF_MF}
\end{figure} 


\section{Related work}

\textbf{Multi-agent reinforcement learning.}  
Multi-agent reinforcement learning (MARL) has gained significant attention, particularly through the centralized training with decentralized execution (CTDE) framework~\cite{lowe2017multi,rashid2020monotonic,CADP}. 
Several methods exemplify the CTDE framework, including value decomposition (VD) methods and policy-based methods. VD factorizes the joint Q-function into components tied to each agent's local Q-function. VDN~\cite{sunehag2017value} adopts a straightforward additive method for this decomposition, while QMIX~\cite{rashid2020monotonic} uses a monotonic function to optimize joint action values. Building on these concepts, QTRAN~\cite{son2019qtran} and WQMIX~\cite{WQMIX} offer more flexible functional representations, further extending the capabilities of value decomposition techniques. Policy-based methods, such as MAPPO~\cite{yu2022surprising} extends Proximal Policy Optimization (PPO) to multi-agent settings, achieving a balance between sample efficiency and stability. MADDPG~\cite{lowe2017multi} integrates the Actor-Critic architecture with centralized training, enabling agents to share information effectively and handle continuous action spaces. Similarly, COMA~\cite{foerster2018counterfactual} introduces a counterfactual baseline to address the credit assignment problem~\cite{rashid2020monotonic} in cooperative scenarios, improving both policy learning and optimization.
Despite these advances, the scalability of MARL methods remains a significant challenge, particularly in environments with numerous agents. 
As the number of agents increases, the complexity of interactions can lead to computational bottlenecks. In response to this challenge, our proposed method is specifically designed for large-scale multi-agent systems, ensuring robust performance even in complex and dynamic environments.

\textbf{Large-scale MARL.} Mean Field (MF)~\cite{yang2018mean} is the most classic method to reduce the unaffordable high dimension of the centralized training~(CT) features and implement MARL on more agents. The MF models multi-agent interactions as a pairwise interaction between two agents, where one is the specific agent and the other is constructed as a mean field virtual agent. Specifically, the virtual agent corresponds to the mean effect of all the neighboring agents. MF techniques have proven effective in large-scale MARL applications, such as edge computing~\cite{abouaomar2021mean}, game playing~\cite{perrin2021mean} and robotic controlling~\cite{said2021multi}. However, this approximation overlooks the differing strengths of interactions among agents, resulting in precision loss when modeling their complex relationships. To address this limitation, enhancements like MTMF~\cite{ganapathi2020multi} and GAT-MF~\cite{hao2023gat} have been introduced. MTMF categorizes agents into several types and perform mean field approximation for each type, but it still suffers from precision issues and requires prior knowledge for categorization. GAT-MF utilizes parameters to represent agent correlations and performs a weighted mean field approximation, but the virtual mean field agent can only be approximated from fixed neighbors, failing to adapt to dynamic scenarios, especially systems with fluctuating agent numbers. Our method, on the other hand, designs an innovative dynamic grouping attention mechanism for effective approximation while adapting to dynamic scenarios.

\textbf{Agent grouping in large scale MARL.} Agent grouping effectively simplifies large-scale multi-agent problems into smaller, more manageable ones. There are some direct agent grouping methods that conform to human intuition, which are widely used in multi-agent systems to simplify complex problems. MTMF~\cite{ganapathi2020multi} use K-means to group agents based on their observation and state information. SOG~\cite{shao2022self} selects group leaders and clusters agents within their observation range, but the effectiveness of grouping largely hinges on the reasonable selection of leaders. 
DHCG~\cite{liu2023deep} models agent relationships as a cooperation graph and groups agents with a threshold. HCGL~\cite{fu2024self} and GACG~\cite{duan2024group} uses deep neural networks to automatically classify agents into predefined groups. These methods only use agents' original partial information for grouping, which causes noise and is incomplete in complex multi-agent environments. Our method presents an improved classification scheme that leverages agent observations, states, and actions to derive their representations. By encoding agents' randomness into latent variables, BMF effectively reduces noise and enables a reliable agents' classification.

\section{Preliminary}
\subsection{Multi-agent Markov Decision Process}
We discuss the large-scale MARL setting as Multi-agent Markov Decision Process~(MMDP), defined as a tuple $\langle \mathcal{N},\mathcal{S},p_0,\mathcal{A},P,r,\gamma \rangle$, where $\mathcal{N} = \{i\}_{i=1}^n$ is the set consists of $n$ agents and $\boldsymbol{s}=(s_1,\cdots,s_n)\in\mathcal{S}$ is the global state of the environment. The initial state distribution is given by $p_0=\Delta(\mathcal{S})$, which is a probability distribution collection obtained from state space $\mathcal{S}$. The joint action space  $\boldsymbol{a}=(a_1,\cdots,a_n)\in\mathcal{A}= \mathcal{A}_1 \times \cdots \times \mathcal{A}_N$ and each agent's action $a_i \in \mathcal{A}_i$ is produced by policy $\pi_i(a_i  | s)$, forming the joint action $\boldsymbol{a}$ at each time step. The state transition from $s$ to $s'$ for one step is defined by the state transition function $P(s' | s, \boldsymbol{a}):\mathcal{S}\times\mathcal{A}\times\mathcal{S} \to [0,1]$ and will receive a reward via reward function $r(s, \boldsymbol{a}):\mathcal{S}\times\mathcal{A}\to \mathbb{R}$. The long-term reward from $t_0$ to $t$ is defined as: 
\begin{eqnarray}
R = \sum_{t=t_0}^{T}{\gamma^{t-t_0} r_{t}},
\end{eqnarray}
 where $\gamma\in [0, 1]$ is the discount factor and $T$ is the maximum count of steps in one episode.
 
\subsection{Q Learning}
One of the basic methods is Deep Q Learning, which attempts to find an efficient policy by maximizing the value function $Q(s,a)$. $Q(s,a)=\mathbb{E}[r_{t+1}+\gamma r_{t+2}+\gamma^2 r_{t+3}+\cdots|s_t=s,a_t=a]$. For one step transition from state-action unit $(s,a)$ to $(s',a')$, 
 This value function is modeled by minimizing the following loss function:
\begin{equation}
\begin{split}
\mathcal{L} &= \mathbb{E}_{(s,a,r,s')}[(Q(s,a)-y)^2], \\
y &= r+\gamma\mathop{\max}\limits_{a'}Q'_{(s',a')}, \\
\end{split}
\end{equation}
where $Q'$ is synchronized from $Q$ with a time step delay.

\subsection{Policy Gradient}
Policy Gradient addresses continuous action space problems, with Actor-Critic being a classic method. It involves two stages: the policy network outputs an action, and the value network evaluates a Q-value for the state-action pair $(s,a)$. The value network is trained similarly to Q~ Learning:
\begin{eqnarray}
\mathcal{L}=\mathbb{E}_{(s,a,r,s')}[(Q(s,a)-y)^2],\quad y=r+\gamma Q'(s',a'),
\end{eqnarray}
and the policy network is optimized to maximize the value, following the gradient:
\begin{eqnarray}
\nabla J = \mathbb{E}_{a\sim\pi}[\nabla \log\pi(a|s)Q(s,a)],
\end{eqnarray}
where $\pi$  is the policy network. The actions are decided by the policy network, which is well modeled through the Actor-Critic algorithm.


\begin{figure*}[htbp]
  \centering
  \includegraphics[width=1\linewidth]{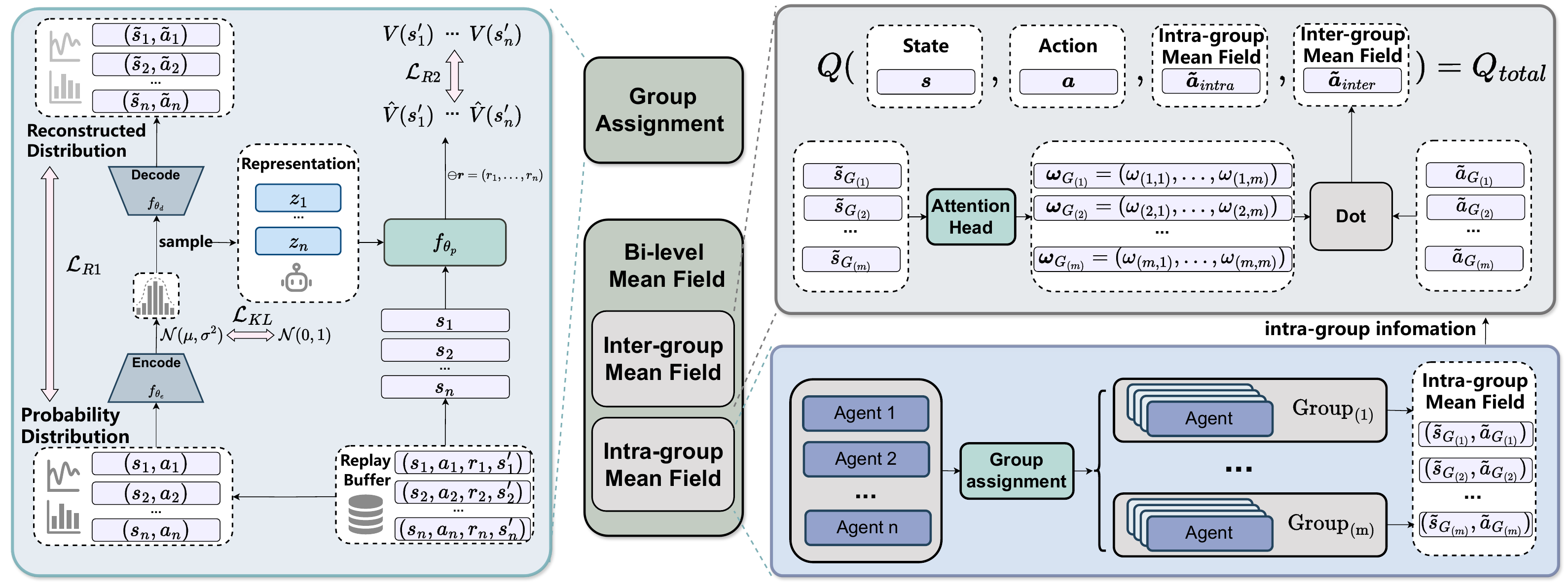}
  \vspace{1pt}
  \caption{The Bi-level Mean Field (BMF) framework. The group assignment forward model for learning agent representations is based on a VAE architecture. The intra-group MF considers the influence of same-type agents and approximates the average effect for a group. The inter-group MF focuses on the differences between different groups and calculates the total Q value.}
  \vspace{6pt}
  \label{BMF_framework}
\end{figure*}


\section{Method}


\subsection{Overview}
As shown in Figure \ref{BMF_MF}, original MF uses an unweighted mechanism to collect the average global information, the approximation of the value function for agent $j$ is considered as:
\begin{eqnarray}
Q_j(s,\boldsymbol{a}) \sim \widetilde{Q}_j(s, a_j, \tilde{a}_j), \quad \tilde{a}_j= \frac{1}{|N(j)|}\sum_{k \in N(j)}a_k,
\end{eqnarray}
where $N(j)$ is the neighboring agents of agent $j$ and the action set $\boldsymbol{a}$ in value function $Q^j$ is approximate by a two actions unit $(a_j,a_k)$ . This method reduces the dimension of $\boldsymbol{a}$ by averaging the neighbor information. However, this unweighted approximation fails to consider that interactions between agents differ across various pairs and can change over time. Therefore, we propose a bi-level mean field with a dynamic group mechanism to refine the types of agents and effectively model agent groups' interactions. The value function is as follow when considering agent $j$ in group $m\in G$, where $G$ is the set of groups and $G(m)$ means the collection of groups in $G$ excluding $m$:
\begin{equation}
\begin{split}
Q_j(s,\boldsymbol{a}) &\sim \widehat{Q}_j(s, a_j, \tilde{a}_j, \tilde{a}_m), 
\quad \tilde{a}_j= \frac{1}{|N_m(j)|}\sum_{k \in N_m(j)}a_k, \\
\tilde{a}_m &= \frac{1}{W_m}\sum_{n \in G(m)}w_{mn}\tilde{a}_n, 
\quad W_m=\sum_{n \in G(m)}w_{mn}.
\end{split}
\end{equation}
Here $N_m(j)$ is the neighboring agent of agent $j$ in group $m$, and $\tilde{a}_j$ is calculated as intra-group mean field action, while $\tilde{a}_m$ is a weighted aggregation based on the inter-group mean field results.

\subsection{Dynamic Group Assignment}
The dynamic group assignment mechanism effectively aggregates similar types of agents, establishing the foundational framework for our bi-level mean field method and reducing the interaction costs among agents. Considering the high randomness in large-scale MARL, the observations and actions of each agent exhibit differences over time, leading to frequent changes in agent types and corresponding adjustments in agent group assignment. Therefore, our method dynamically evaluates agent types and performs grouping at certain time intervals, allowing the group assignment mechanism could adapt to complex large-scale MARL scenarios without requiring prior knowledge to specify agent types.

Our dynamic group assignment extracts representations of agents by inputting their states, observations, and actions information. In order to make sure agents of the same type are focused on actions that own similar effects, we design a mechanism for extracting preference representations that emphasize actions, expecting these representations to be used to derive the reward with the given current state. Additionally, since agents executing the same action are distributed sparsely in large-scale MARL scenarios, we use a variational autoencoder (VAE) to extract the agents' representations. The VAE encodes the randomness of the agent distribution into the representation, serving to filter out noise.

As shown in Figure \ref{BMF_framework}, we employed a mixed model of prediction and regression to learn the agent encoder. For a given agent j, we can obtain the representation $z_j$ and design a reconstruct loss. We also combine $z_j$ and $s_j$ to predict the value of the next state $V(s_j')$, designing a predict loss.
Combining prediction and regression, the loss function is designed as:
\begin{eqnarray}
\begin{aligned}
\mathcal{L}_e(\theta_e,\theta_d,\theta_p)
&=\mathbb{E}_{(\boldsymbol{s},\boldsymbol{a},\boldsymbol{r},\boldsymbol{s'})\thicksim\mathcal{D}}
\Big[\lambda_p\sum_{i=1}^n(d(\boldsymbol{z}_i)-(s_i,a_i))^2 \\
& \quad +\lambda_e\sum_{i=1}^n(p(\boldsymbol{z}_i,s_i)+r_i-\gamma_{s'})^2\Big],
\end{aligned}
\end{eqnarray}
In this paper, the encoder is trained as a forward model, and we simply use $k$-means clustering based on Euclidean distances between agents' representations, but it can be easily extended to other clustering methods.

\subsection{Bi-level Mean Field}
The bilevel mean field method, based on dynamic group assignment, has designed an interaction framework for agents that incorporates both inter-group and intra-group interactions. This framework not only enhances overall performance but also effectively reduces computational overhead in large-scale MARL. By optimizing collaboration and competition mechanisms among agents, this method achieves more efficient resource utilization and decision-making, demonstrating excellent performance in complex environments.

Intra-group interaction employs an unweighted mean field method, treating agents within the same group as homogeneous (assuming similar behaviors among group members). This retains the advantage of low computational overhead associated with mean field calculations. On the other hand, inter-group interaction utilizes a separate unweighted mean field method, treating agents from different groups as heterogeneous. It incorporates an attention mechanism to account for the differences in interactions among heterogeneous agents, effectively modeling their interactions.

Our method includes two forms tailored for discrete and continuous actions, specifically BMF-Q and BMF-AC. BMF-Q follows the optimization method of Q-learning, suitable for decision-making problems in discrete action spaces, while BMF-AC is based on the Deep Deterministic Policy Gradient (DDPG), suitable for optimizing continuous action spaces. This design allows our method to flexibly address different types of reinforcement learning tasks.
For agent $j$ in group $m$, BMF-Q is trained by minimizing the loss function:
\begin{eqnarray}
\begin{aligned}
\mathcal{L}_q(\theta_j)
&=(Q_{\theta_j}(s_j,a_j,\tilde{a}_j,\tilde{a}_m)-y_j)^2,
\end{aligned}
\label{critic_loss}
\end{eqnarray}
where $y_j=\gamma \cdot V_{\theta_j}(s_j')+r_j,\gamma \in (0,1]$ is the target value of state $s$ based on $V_{\theta_j}(s_j')$. And BMF-AC consists of a value network and a policy network. The value network follows the same setting of BMF-Q and the policy network is trained by the policy gradient:
\begin{equation}
\begin{split}
\nabla_{\phi_j}\mathcal{J}(\phi_j)=E_{a_j\sim \pi_{\phi_j}(s)}[\nabla_{\phi_j}\log\pi_{\phi_j}(s)Q_{\theta_j}(s,a_j,\tilde{a}_j,\tilde{a}_m)], 
\label{actor_loss}
\end{split}
\end{equation}
where $\pi_{\phi_j}$ is the policy explicitly modeled by neural networks with the weights $\phi$. BMF pseudocode is provided in Algorithm ~\ref{bmf}.
\begin{algorithm}[!h]
	\caption{Large scale MARL with Bi-level MF} 
	\label{bmf} 
	\begin{algorithmic}
		\REQUIRE Number of agents $n_a$, max episode length $n_e$,  MARL model update interval $I_u$, group assignment interval $I_g$, forward VAE model $M_\theta$, discount factor $\gamma$.
		\ENSURE The trained MARL model.
            \STATE $\theta,\phi \sim$ initial parameters for evaluation network.
            \STATE $\mathcal{D} \leftarrow$ empty replay buffer.
            \FOR{each episode}
                \FOR{$t=1$ to $n_e$}
                    \IF{reach group assignment interval $I_g$}
                        \STATE agent representations $\mathbf{R}_a \leftarrow M_\theta(\boldsymbol{s}_{t-1},\boldsymbol{a}_{t-1})$.
                        \STATE $\mathbf{G} \leftarrow Kmeans(\mathbf{R}_a)$.
                    \ENDIF
                    \FOR{each group $\in \mathbf{G}$}
                        \STATE Calculate each group's representation action: $\tilde{a}_n= \frac{1}{|\mathbf{G}_n|}\sum_{k \in \mathbf{G}_n}a_k$.
                    \ENDFOR
                    \FOR{agent $j=1$ to $n$}
                        \STATE For agent $j$ in group $m$: get intra-group neighboring agents collection $N_m(j)$ and inter-group neighboring agents collection $N_n(j)$.
                        \STATE Calculate the intra-group mean field action: $\tilde{a}_j=\frac{1}{|N_m(j)|}\sum_{k \in N_m(j)}a_k$.
                        \STATE Calculate the inter-group mean field action: $\tilde{a}_m=\frac{1}{W_m}\sum_{n \in \mathbf{G}(m)}w_{mn}\tilde{a}_n$.
                    \ENDFOR
                    \STATE Organize intra-group mean field actions as $\boldsymbol{\tilde{a}_{intra}}$ and inter-group mean field actions as $\boldsymbol{\tilde{a}_{inter}}$.
                    \STATE Calculate and execute the joint action $\boldsymbol{a}_t=(a_1,...,a_n) \in \mathcal{A}$ by $f_\phi$: $\boldsymbol{a}_t \leftarrow f_\phi(\boldsymbol{s}_t)$, get the next state $\boldsymbol{s}_t'=(s_1',...,s_n') \in \mathcal{S}$ and the reward $\boldsymbol{r}_t=(r_1,...,r_n) \in \mathbb{R}^n$.
                    \STATE Store $(\boldsymbol{s},\boldsymbol{a},\boldsymbol{r},\boldsymbol{s'},\boldsymbol{\tilde{a}_{intra}},\boldsymbol{\tilde{a}_{inter}})$ into $\mathcal{D}$.
                    \IF{reach model update interval $I_u$}
                        \STATE Sample an experience $(\boldsymbol{s},\boldsymbol{a},\boldsymbol{r},\boldsymbol{s'},\boldsymbol{\tilde{a}_{intra}},\boldsymbol{\tilde{a}_{inter}})$ from $\mathcal{D}$. 
                        \STATE Update $\theta$ according to critic loss:
                        $\mathcal{L}_q(\theta_j)=(Q_{\theta_j}(s_j,a_j,\tilde{a}_j,\tilde{a}_m)-y_j)^2$.
                        \STATE Update $\phi$ according to actor loss: $\nabla_{\phi_j}\mathcal{J}(\phi_j)=E_{a_j\sim \pi_{\phi_j}(s)}[\nabla_{\phi_j}\log\pi_{\phi_j}(s)Q_{\theta_j}(s,a_j,\tilde{a}_j,\tilde{a}_m)]$.
                    \ENDIF
                \ENDFOR
		\ENDFOR
	\end{algorithmic} 
\end{algorithm}

\subsection{Theoretical Analysis}
We prove the validity of the BMF method under mean field conditions and discuss the error bounds brought by the BMF method. We find that the error of BMF is bounded by a interval $[-2K,2K]$, under the condition that the Q-function is K-smooth. Due to space limitation, we provide the proof validating the BMF here, while the detailed proof of the error bound is provided in the supplementary material.
\begin{assumption}
\label{assumption1}
    The global Q-function is equivalent to the sum of local Q-functions:
    \begin{equation}
    \begin{aligned}
    &Q(\boldsymbol{s},\boldsymbol{a})=\sum_jQ_j(s,\boldsymbol{a}),\\
    \end{aligned}
    \end{equation}
\end{assumption}

\begin{assumption}
\label{assumption2}
    For agent $j$ in group $m$, the local Q-function can be factorized by a set of Q-functions that capture pairwise interactions:
    \begin{equation}
    \begin{aligned}
    Q_j(s,\boldsymbol{a})
    &=\frac{1}{|N_m(j)|}\sum_{k \in N_m(j)}\widetilde{Q}_j(s, a_j, {a}_k) \\
    & \quad +\frac{1}{W_m}\sum_{\substack{n \in G(m)\\k'\in N_n(j)}}w_{mn}\widetilde{Q}_j(s, a_j, {a}_{k'})
    \end{aligned}
    \end{equation}
    where $N_m(j)$ is the set of neighbors of agent $j$ and $n$ is a group different from $m$.
\end{assumption}

\begin{theorem}[BMF approximation]
\label{theorem1}
    When considering agent $j$ in group $m$, the global Q-function can be represented as:
    \begin{equation}
    Q_j(\boldsymbol{s},\boldsymbol{a}) \sim \sum_{m}\widehat{Q}_j(s, a_j, \tilde{a}_j, \tilde{a}_m), 
    \end{equation}
    where $\tilde{a}_j$ is intra-group MF action and $\tilde{a}_m$ is the inter-group MF action.
\end{theorem}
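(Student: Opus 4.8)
The plan is to combine the two structural assumptions with a Taylor-expansion argument in the spirit of the classical mean-field derivation. First I would invoke Assumption~\ref{assumption1} to write the global value as $Q(\boldsymbol{s},\boldsymbol{a})=\sum_j Q_j(s,\boldsymbol{a})$, reducing the claim to establishing the compact form for a single local term $Q_j$. I would then substitute the factorization of Assumption~\ref{assumption2}, which already splits $Q_j$ into an intra-group pairwise average over $N_m(j)$ and a weighted inter-group pairwise sum over the remaining groups $n\in G(m)$. The goal of the remaining work is to show that each of these two sums collapses, up to higher-order remainder, onto a single surrogate action.

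For the intra-group sum, I would expand each pairwise term $\widetilde{Q}_j(s,a_j,a_k)$ to second order about the intra-group mean action $\tilde{a}_j=\frac{1}{|N_m(j)|}\sum_{k\in N_m(j)}a_k$. The crucial observation is that, by the very definition of $\tilde{a}_j$ as the average, the deviations $a_k-\tilde{a}_j$ sum to zero, so the first-order gradient contribution cancels upon averaging and the intra-group sum reduces to $\widetilde{Q}_j(s,a_j,\tilde{a}_j)$ plus an averaged second-order remainder. This is the standard single-level mean-field cancellation.

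The inter-group sum requires a nested, two-level version of the same idea. I would first fix a group $n$ and expand the inner pairwise terms about that group's mean action $\tilde{a}_n$, so that the inner sum over $k'\in N_n(j)$ reduces to a term in $\tilde{a}_n$; I would then expand once more about the inter-group mean $\tilde{a}_m=\frac{1}{W_m}\sum_{n\in G(m)}w_{mn}\tilde{a}_n$, whereupon the weighted deviations $\sum_n w_{mn}(\tilde{a}_n-\tilde{a}_m)$ again cancel by construction of $\tilde{a}_m$, leaving a single term $\widetilde{Q}_j(s,a_j,\tilde{a}_m)$. Merging the surviving intra- and inter-group terms into the compact critic $\widehat{Q}_j(s,a_j,\tilde{a}_j,\tilde{a}_m)$ and summing back over agents via Assumption~\ref{assumption1} then yields the stated approximation.

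I expect the chief difficulty to lie in the inter-group step: unlike the single-level cancellation of the intra-group case, the two-stage expansion produces cross terms and a compounded second-order remainder, and one must verify that the weighting $w_{mn}/W_m$ is precisely what makes the outer first-order term vanish. Controlling this remainder under the $K$-smoothness hypothesis is what ultimately yields the $[-2K,2K]$ error interval; here I would only argue that each cancellation is exact at first order and defer the quantitative remainder estimate, consistent with the paper relegating the full error-bound analysis to the supplement.
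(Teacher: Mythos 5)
Your proposal follows essentially the same route as the paper's proof: reduce to a single local term via Assumption~\ref{assumption1}, substitute the pairwise factorization of Assumption~\ref{assumption2}, and Taylor-expand about the mean actions so that the first-order terms vanish because the (weighted) deviations $\delta a_{jk}$ and $\delta a_{mn}$ average to zero by construction. Your nested two-stage expansion of the inter-group sum (first about $\tilde{a}_n$, then about $\tilde{a}_m$) is a slightly more careful rendering of what the paper does in one step by directly identifying $a_{k'}$ with $\tilde{a}_m+\delta a_{mn}$, but the decomposition, the key cancellation, and the deferral of the remainder estimate are all the same.
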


\begin{proof}
    Here we prove the feasibility of using BMF approximation for the global Q-function. 

    As shown in Assumption~\ref{assumption1}, the global Q-function is equivalent to the sum of local Q-functions under MF setting, and we have $Q(\boldsymbol{s},\boldsymbol{a})=\sum_jQ_j(s,\boldsymbol{a})$. Therefore, to demonstrate that Theorem ~\ref{theorem1} is valid, we need to prove the local Q-function $Q_j(s,\boldsymbol{a})$ can be approximated as $\widetilde{Q}_j(s, a_j, \tilde{a}_j, \tilde{a}_m)$.
    
    When considering agent $j$ in group $m$, the value function can be approximated as $Q_j(s,\boldsymbol{a}) \sim \widetilde{Q}_j(s, a_j, \tilde{a}_j, \tilde{a}_m)$. Here we have two types of mean field actions, $\tilde{a}_j$ and $\tilde{a}_m$, respectively for intra-group and inter-group.
    Considering the intra-group mean field action $\tilde{a}_j$, since
    \begin{equation}
    \tilde{a}_j= \frac{1}{|N_m(j)|}\sum_{k \in N_m(j)}a_k,
    \end{equation}
    in group $m$, we can regard each agent $a_k$, $k\in N_m(j)$ has a action represented by $\tilde{a}_j$ with a fluctuation:
    \begin{equation}
    a_k=\tilde{a}_j+\delta a_{jk}, 
    \end{equation}
    and we can get a formula of $\delta a_{jk}$:
    \begin{equation}
    \begin{aligned}
    \frac{1}{|N_m(j)|}\sum_{k \in N_m(j)}\delta a_{jk}
    &=\frac{1}{|N_m(j)|}\sum_{k \in N_m(j)}(a_k-\tilde{a}_j)\\
    &=\frac{1}{|N_m(j)|}\sum_{k \in N_m(j)}a_k-\tilde{a}_j\\
    &=\tilde{a}_j-\tilde{a}_j=0,
    \end{aligned}
    \end{equation}
    Similarly, considering the inter-group mean field action $\tilde{a}_m$, we have
    \begin{equation}
    \tilde{a}_n=\tilde{a}_m+\delta a_{mn}, \quad \frac{1}{W_m}\sum_{n \in G(m)}w_{mn}\delta a_{mn} = 0,
    \end{equation}
    and the isomorphic properties of agents within the same group determine that the agent's action differs from the mean field action of its group by a tiny deviation. We have 
    \begin{equation}
    \begin{aligned}
    &{a}_j \sim \tilde{a}_m, \quad \text{agent } j \in \text{group } m,\\
    &{a}_{k'} \sim \tilde{a}_n, \quad \text{agent } k' \in \text{group } n, \\
    &\quad \quad \quad \quad \quad (m,n \in G, m \neq n)
    \end{aligned}
    \end{equation}
    when considering two agents $j$, $k$, which belong to different groups and have a noticeable difference. The total value function of agent j can be expressed as
    \begin{equation}
    \begin{aligned}
    Q_j(s,\boldsymbol{a}) 
    &=\frac{1}{|N_m(j)|}\sum_{k \in N_m(j)}\widetilde{Q}_j(s, a_j, {a}_k) \\
    & \quad +\frac{1}{W_m}\sum_{\substack{n \in G(m)\\k'\in N_n(j)}}w_{mn}\widetilde{Q}_j(s, a_j, {a}_{k'}) \\
    &=\frac{1}{|N_m(j)|}\sum_{k \in N_m(j)}\widetilde{Q}_j(s, a_j, \tilde{a}_j+\delta a_{jk}) \\
    & \quad +\frac{1}{W_m}\sum_{\substack{n \in G(m)\\k'\in N_n(j)}}w_{mn}\widetilde{Q}_j(s, a_j,\tilde{a}_m + \delta a_{mn}).
    \end{aligned}
    \label{sum_of_Q_j}
    \end{equation}
    Denoting $\widetilde{Q}_j(s, a_j, \tilde{a}_j)$ as $Q_0$, $\widetilde{Q}_j(s, a_j, \tilde{a}_m)$ as $Q_1$, we can obtain a derivation using Taylor's formula:
    \begin{equation}
    \begin{aligned}
    (\ref{sum_of_Q_j})
    &=\frac{1}{|N_m(j)|}\sum_{k \in N_m(j)}\Big[Q_0 + \nabla_{\tilde{a}_j}Q_0 \cdot \delta a_{jk}+ o_{jk} \Big] \\
    & \quad +\frac{1}{W_m}\sum_{\substack{n \in G(m)}}w_{mn}\Big[Q_1 + \nabla_{\tilde{a}_m}Q_1 \cdot \delta a_{mn}+ o_{mn} \Big]
    \\
    &=Q_0 + \nabla_{\tilde{a}_j}Q_0 \cdot \frac{1}{|N_m(j)|}\sum_{k \in N_m(j)}\delta a_{jk}\\
    & \quad +Q_1 + \nabla_{\tilde{a}_m}Q_1 \cdot \frac{1}{W_m}\sum_{\substack{n \in G(m)}}w_{mn}\delta a_{mn} \\
    & \quad +\frac{1}{|N_m(j)|}\sum_{k \in N_m(j)}o_{jk}+\frac{1}{W_m}\sum_{\substack{n \in G(m)}}w_{mn}o_{mn} 
    \\
    &=Q_0 + 0 +Q_1 + 0  \\
    & \quad + \frac{1}{|N_m(j)|}\sum_{k \in N_m(j)}o_{jk}+ \frac{1}{W_m}\sum_{\substack{n \in G(m)}}w_{mn}o_{mn}
    \\
    &\approx Q_0 + Q_1 = \widetilde{Q}_j(s, a_j, \tilde{a}_j)+\widetilde{Q}_j(s, a_j, \tilde{a}_m),
    \end{aligned}
    \label{Q0_Q1}
    \end{equation}
    where $o_jk$ and $o_mn$ denote the Taylor polynomial’s remainder. Since the agents within and between groups do not overlap, the intra-group and inter-group mean field actions can be combined for consideration, forming $\widehat{Q}_j(s, a_j, \tilde{a}_j, \tilde{a}_m) = \widetilde{Q}_j(s, a_j, \tilde{a}_j) + \widetilde{Q}_j(s, a_j, \tilde{a}_m)$. Therefore, we prove $Q_j(s,\boldsymbol{a}) \sim \widehat{Q}_j(s, a_j, \tilde{a}_j, \tilde{a}_m)$ accroding to formula (\ref{sum_of_Q_j}) and (\ref{Q0_Q1}).
\end{proof}

\section{Experiments}

\begin{figure*}[htbp]
	\centering
	\begin{subfigure}{0.98\linewidth}
		\centering
		\includegraphics[width=0.98\linewidth]{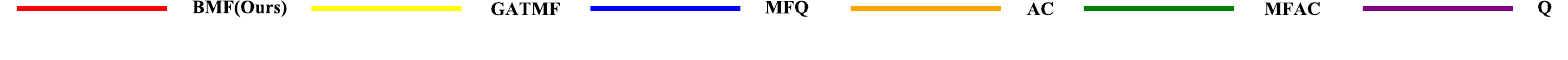}
		\label{FF_legend}
	\end{subfigure}
 
	\centering
	\begin{subfigure}{0.33\linewidth}
		\centering
		\includegraphics[width=0.9\linewidth]{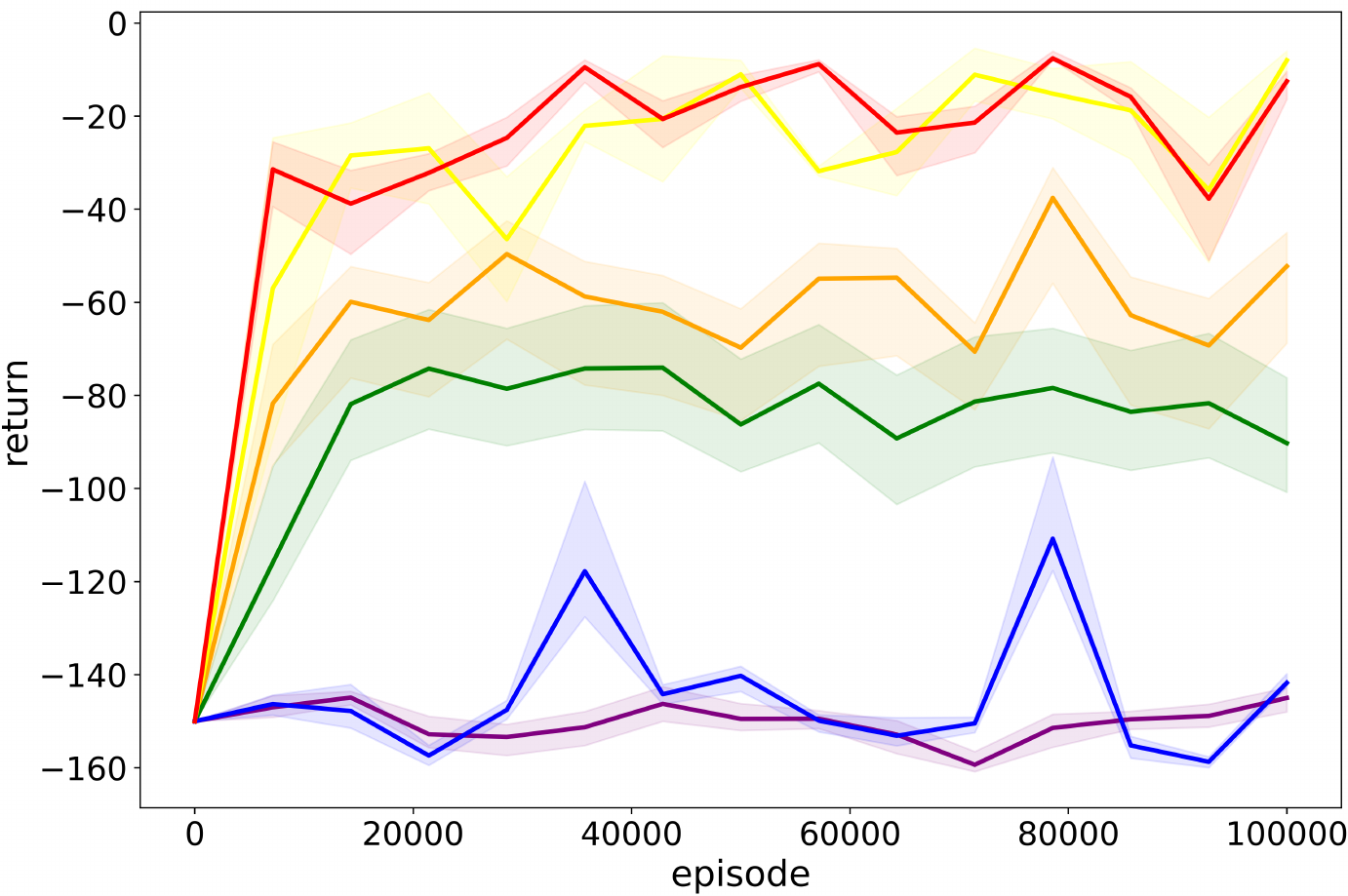}
		\caption{Firefighter}
		\label{FF_per}
	\end{subfigure}
	\centering
	\begin{subfigure}{0.33\linewidth}
		\centering
		\includegraphics[width=0.9\linewidth]{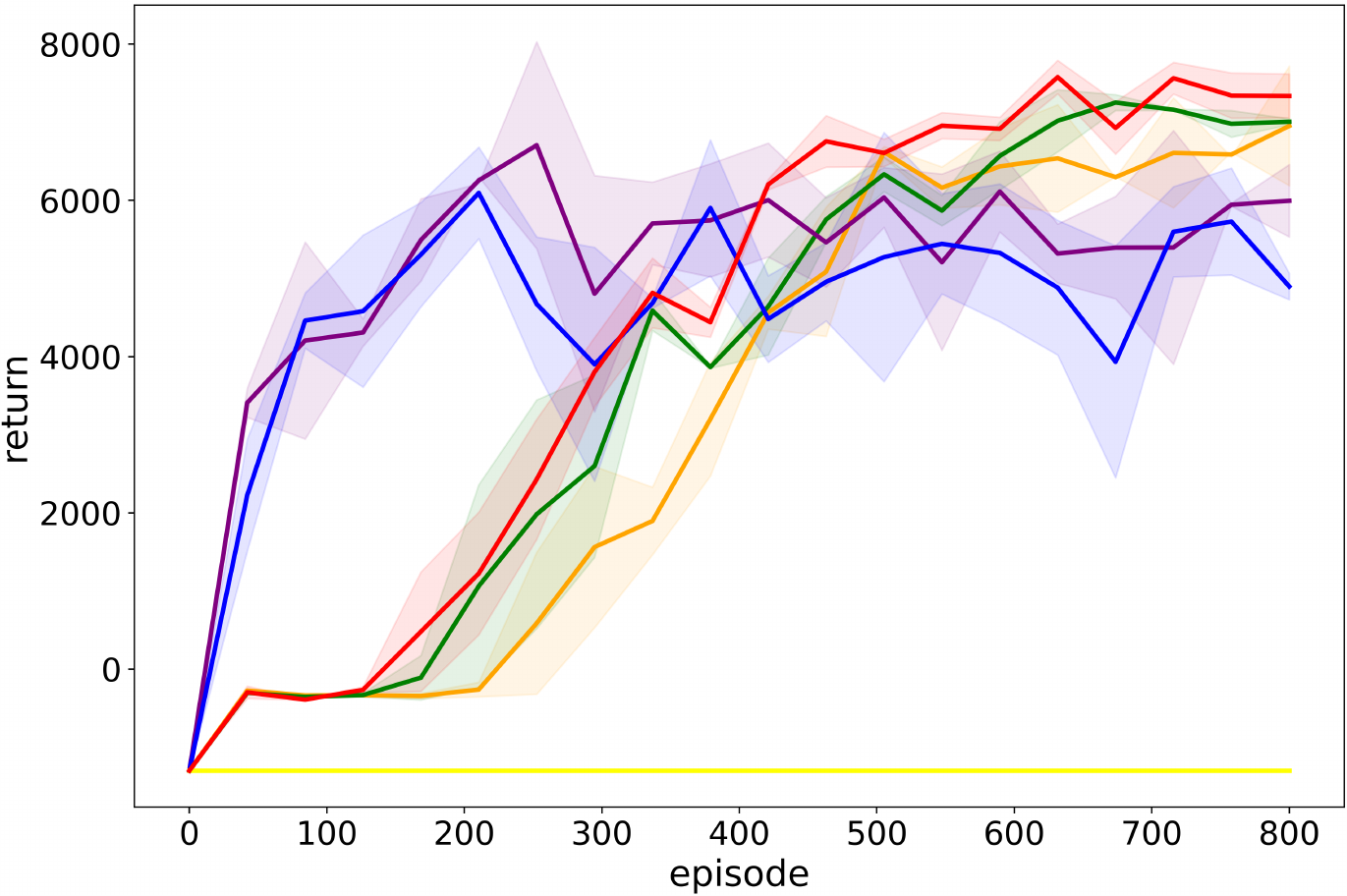}
		\caption{Adversarial Pursuit}
		\label{AP_per}
	\end{subfigure}
	\centering
	\begin{subfigure}{0.33\linewidth}
		\centering
		\includegraphics[width=0.9\linewidth]{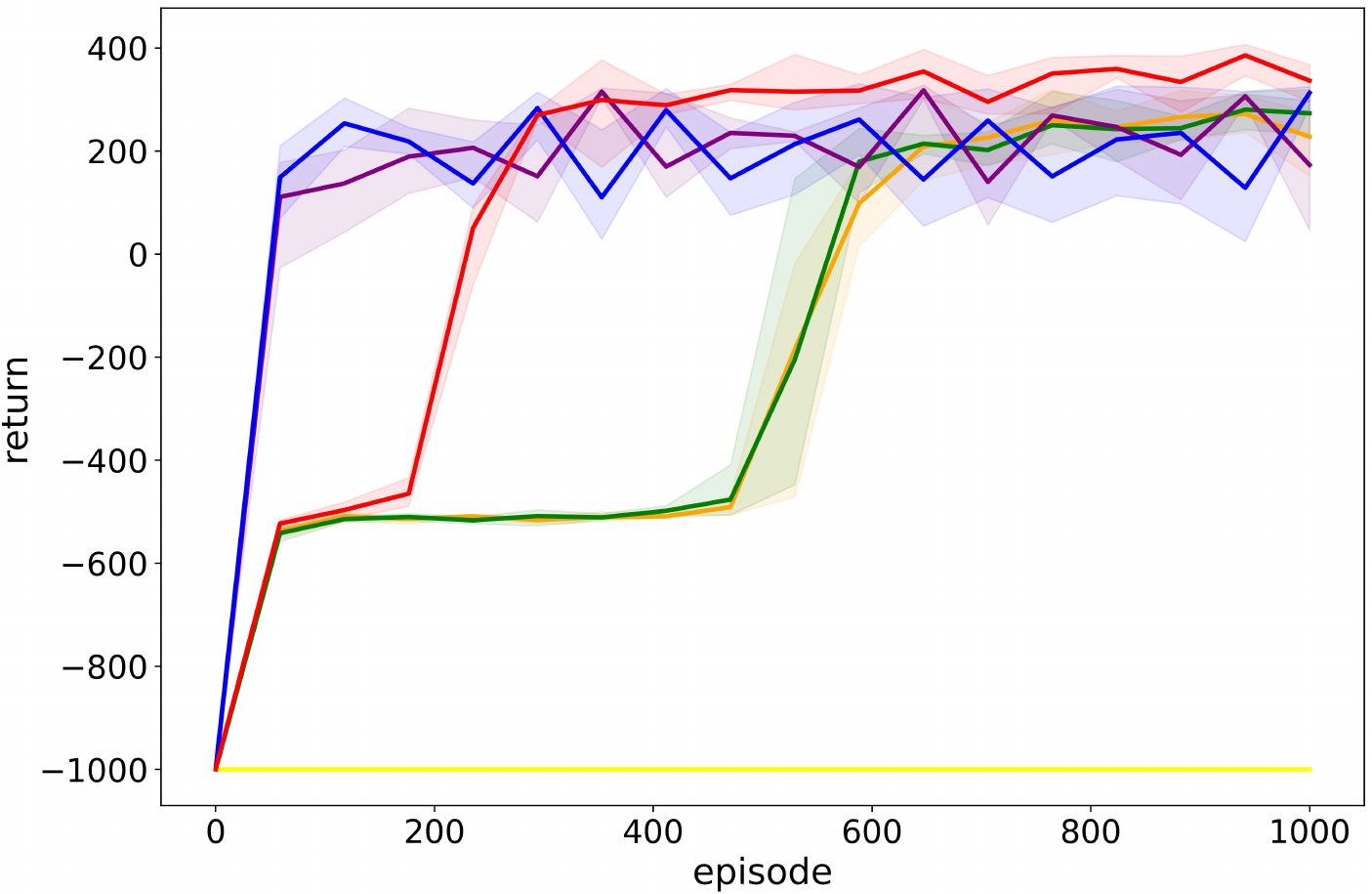}
		\caption{Battle}
		\label{BA_per}
	\end{subfigure}
	\vspace{3pt}
        \caption{Compare BMF with other methods in various large-scale MARL environments. All experimental results are illustrated with the mean and the standard deviation of the metrics over 4 random seeds for a fair comparison. }
        \vspace{3pt}
	\label{Performance}
\end{figure*}

\subsection{Settings}
Our BMF method is tested in three multi-agent experimental environments: \textbf{Firefighter}, \textbf{Adversarial Pursuit}, and \textbf{Battle}, following previous works~\cite{jiang2021multi,zheng2018magent}. Specifically, Firefighter focuses on a fully cooperative task, while the other two focus on mixed cooperative-competitive tasks. The detailed introduction and visualization of the three environments are provided in the supplementary material.
We select the four algorithms tested in the original mean field methods (MFAC, MFQ, AC, Q) as the baseline, and our BMF is also compared with the state-of-the-art method GAT-MF.
Unlike these methods, BMF sets the number of clusters as a hyperparameter and conducts robustness experiments on this hyperparameter.
To further explain the learned collaboration and competition strategies from BMF, we implement a qualitative visualization analysis in \textbf{Battle} environment. The detailed results and analysis is provided in the supplementary material.

\subsection{Performance}
We first compare the performance of our BMF and other baselines under three environments in cooperative and competitive tasks. Figure~\ref{Performance} presents the return curves of comparison methods. 

The \textbf{Firefighter} environment is set up with $50$ firefighter agents and $50$ burning houses, where each agent has a fixed position. Each house initially has a value of $-3$ representing the current fire intensity and decreases by $-1$ at each time step. The total sum of the fire conditions of all houses serves as the final return value. Our BMF method significantly outperforms the MF method and slightly surpasses GAT-MF, which performs excellently in convergence speed and the value obtained.

The \textbf{Adversarial Pursuit} environment is set up with $25$ pursuit agents and $50$ targets. Whenever a pursuit agent successfully tags a target, it receives a reward of $1$. Regardless of whether the tagging is successful or not, there is a penalty of $-0.2$ for each tagging attempt. The total value obtained from successfully tagging all targets is considered the final value. Our BMF method is adapted to this dynamically changing environment compared to GAT-MF, and it has achieved better final values compared to other methods.

The \textbf{Battle} environment is set up as a combat environment with $128$ agents. The agents are evenly divided into two teams that compete against each other. Agents receive small rewards for specific actions, and killing an enemy grants the team a value of $5$. The overall value obtained by the team is considered as the final value. Our BMF method performs better than other methods in this large-scale cooperative competition environment with dynamic changes. 

The results show that BMF can effectively adapt to dynamic and variable large-scale MARL environments, yielding favorable outcomes. 
In addition to evaluating the return value, we also tested the success rate of BMF and the effectiveness of killing enemies in the \textbf{Battle} environment, where agents on the same team must collaborate to compete against opposing agents. The results are provided in the supplementary material.



\subsection{Computational Efficiency}
We compared the time and space costs of BMF with the current state-of-the-art method, GAT-MF. This computational efficiency experiment is conducted in the \textbf{Firefighter} environment and used an NVIDIA RTX A6000 GPU. The results are presented in Table ~\ref{table:computational_efficiency}, which show that BMF reduces the time cost by 31.3\% and the space cost by 15.9\%.

\begin{table}[htbp]
    \centering
    \vspace{3pt}
    \caption{The computational efficiency results with 3 random seeds.}
    \begin{tabular}{ccc}
        \toprule
        method  &  Time(s) & Space(MiB)  \\
        \midrule
        GAT-MF         & 27162±1421.8     &   2116±0.0 \\
        BMF(ours)      & 18655±715.3      &   1780±0.0 \\
        \bottomrule
    \end{tabular}
    \label{table:computational_efficiency}
\end{table}

\subsection{Zero-Shot Generalization}
In this section, we conducted experiments on the battle task with variable agent scales, where the number of agents starts from 128 and increases. The rewards are provided in the table below, which demonstrates that BMF has zero-shot generalization ability across different scales.
\begin{table}[!h]
\label{tab1}
    \vspace{3pt}
    \caption{Zero-shot generalization results with 3 random seeds.}
\resizebox{0.48\textwidth}{!}{ 
\LARGE
\begin{tabular}{cccccc}
\toprule
   \textbf{Num}  & \textbf{Q}  & \textbf{MFQ}  & \textbf{AC}  & \textbf{MFAC}  & \textbf{BMF(ours)} \\
     \midrule
      \textbf{128}   & 186.98 $\pm$ 66.44 & 174.21 $\pm$ 71.42 & 306.19 $\pm$ 23.09 & 318.20 $\pm$ 29.01   & \textbf{396.76 $\pm$ 14.57} \\
      \textbf{288}   & 137.54 $\pm$ 306.07 & 91.91 $\pm$ 108.44  & 565.83 $\pm$ 181.58 & 550.96 $\pm$ 68.01  & \textbf{613.19 $\pm$ 169.70}\\
      \textbf{512}   & 428.37 $\pm$ 542.01 & 176.35 $\pm$ 218.24  & 963.75 $\pm$ 373.27 & 1018.89 $\pm$ 144.66  & \textbf{1188.86 $\pm$ 348.37}\\
      \bottomrule
\end{tabular}}%
    \vspace{-0.35cm}
\end{table}

\subsection{Ablation study}

To analyze the impact of different frameworks and different number of clusters in group assignment, we design ablation experiments to quantitatively validate these effectiveness under the \textbf{Battle} environment. The comparison results of the ablation experiments are presented in Figure~\ref{Abl}. 

\textbf{The ``delays in agent interactions" issue.} The original MFRL uses the action from the previous step to calculate the mean field action, resulting in a time delay. BMF follows the MFRL framework and uses historical mean actions to compute current actions. We implement a no-delay version BMF (follows MTMF's delay handling) and conduct experiments on the battle task with three random seeds. The final rewards are 375.03$\pm$25.44 and 370.81$\pm$35.96, which perform a difference of 1.1\%. Therefore, one-step delay causes slight impact.

\textbf{The robustness of sub-groups.} 
We have additionally conducted experiments to prove that the impact brought by the number of clusters has a certain threshold. When the value of k(the number of sub-groups) rises above this threshold, the variations in k have little impact on the results, demonstrating robustness. We have conducted ablation, and Figure ~\ref{k_abl} shows the performance in the \textbf{Battle} environment under different values of k. Appropriate predefined number of sub-groups is selected for each task.

\textbf{BMF w/o group assignment module.}
To verify the effectiveness of the dynamic group assignment module, we designed BMF-RC, which represents a version of BMF without the dynamic group assignment module. Results in Figure ~\ref{module_abl} show that BMF-RC is inferior to our VAE-based BMF in terms of convergence speed and final value. 

\textbf{BMF with AE-based representation.}
To validate the effectiveness of VAE as the framework for the dynamic group assignment module, we designed BMF-AE, which represents BMF with the AE framework replacing the VAE framework. Results in Figure ~\ref{module_abl} indicate that AE does not have a positive impact on BMF, due to the AE framework's inability to handle the noise introduced by the agents' randomness. Consequently, AE-based BMF is inferior to our VAE-based BMF in terms of convergence speed and final value.

\textbf{BMF with VAE-based representation.}
Our BMF method employs a VAE-based dynamic group assignment module. BMF-VAE represents our method, which demonstrated better performance according to ablation results in Figure ~\ref{module_abl}. 

\begin{figure}[h]
    \centering
	\begin{subfigure}{0.8\linewidth}
        \centering
        \includegraphics[width=0.8\linewidth]{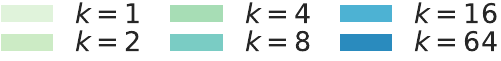}
        
        \includegraphics[width=0.9\linewidth]{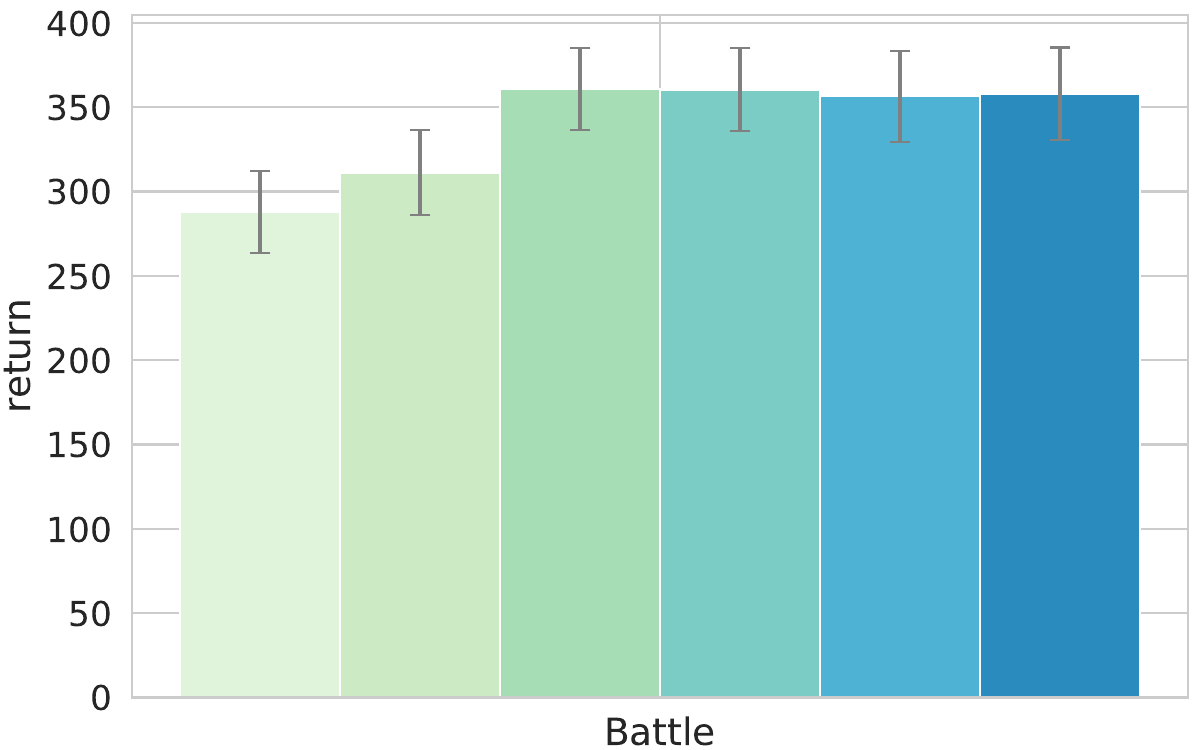}
        \caption{Number of clusters ablation}
        \label{k_abl}
	\end{subfigure}
	\vspace{12pt}
    
    \centering
	\begin{subfigure}{0.8\linewidth}
        \centering
        \includegraphics[width=0.9\linewidth]{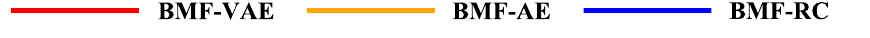}
        
        \includegraphics[width=0.9\linewidth]{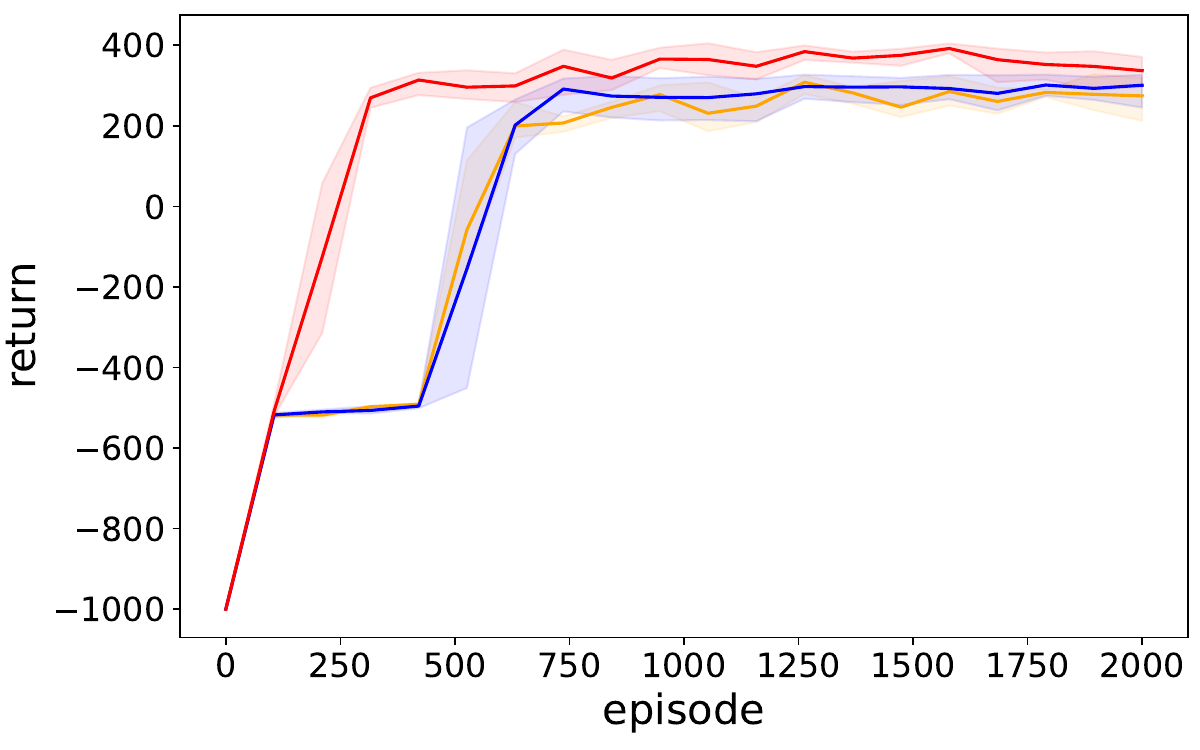}
        \caption{Group assignment ablation}
        \label{module_abl}
	\end{subfigure}
	\vspace{10pt}

	\caption{Ablations about group assignment and the number of clusters under the Battle environment.}
	\vspace{15pt}
	\label{Abl}
\end{figure}

\subsection{Visualization.} 
To further explain the learned collaboration and competition strategies from BMF, we implement a qualitative visualization analysis in \textbf{Battle} environment as shown in Figure~\ref{Vis}. From the behavior slices, it can be observed that the agents exhibit cooperative behaviors such as coordinated attacks and collaborative pursuits. Additionally, agents from different teams engage in competitive behaviors against each other. Our BMF method enables effective cooperation among multiple agents in the same group to chase, while agents in different groups will engage in intense operations such as kite and attack operations with opponents.

\begin{figure}[htbp]
	\centering
        \vspace{15pt}
        \includegraphics[width=1\linewidth]{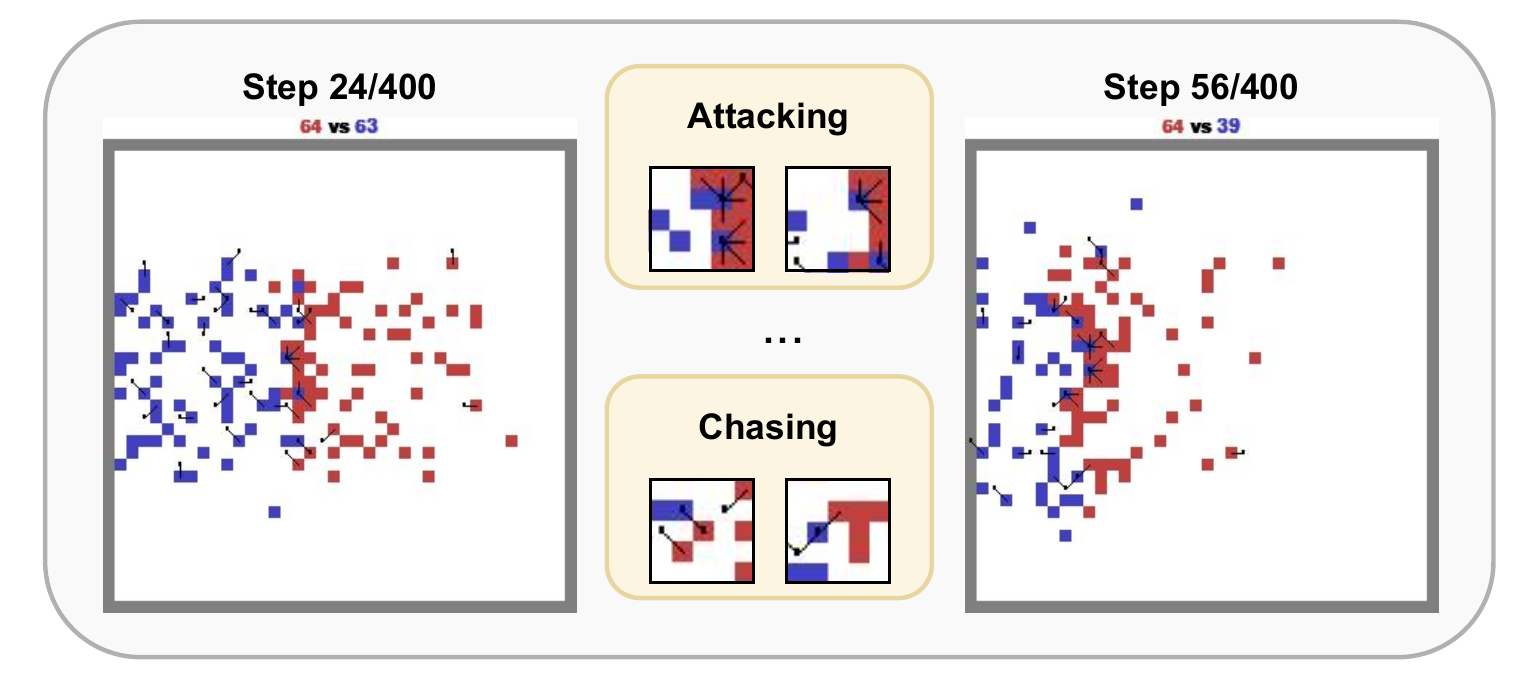}
        \vspace{0pt}
	\caption{Visualization of agents collaboration and competition under the Battle scenario.}
	\label{Vis}
    \vspace{15pt}
\end{figure}

\section{Conclusion}
In this paper, we explore the large-scale MARL problems and propose the Bi-level Mean Field~(BMF) method. Unlike traditional methods, BMF dynamically groups agents based on their extracted hidden features, allowing for a deeper understanding of the relationships between agents. In BMF, we introduce intra-group and inter-group MF. Experiments demonstrate that BMF exhibits strong adaptability across various dynamic large-scale multi-agent environments, outperforming existing methods. 
In future work, we will explore the combination of adaptive grouping mechanisms with BMF and extend BMF to more practical applications.






\appendix
\bibliography{ecai2025}

\end{document}